\documentclass{article}

\usepackage{microtype}
\usepackage{graphicx}
\usepackage{subfigure}
\usepackage{booktabs} 

\usepackage{hyperref}



\usepackage[accepted]{mlsys2022}

\mlsystitlerunning{Efficient Sparse Secure Aggregation for Federated Learning}

\usepackage{amssymb}
\usepackage{amstext}
\usepackage{amsmath}
\usepackage{amsthm}
\newtheorem{theorem}{Theorem}
\newtheorem{lemma}{Lemma}

\usepackage{pifont}
\newcommand{\cmark}{\ding{51}}%
\newcommand{\xmark}{\ding{55}}%

\begin{document}
\twocolumn[
\mlsystitle{Efficient Sparse Secure Aggregation for Federated Learning}



\mlsyssetsymbol{equal}{*}

\begin{mlsysauthorlist}
\mlsysauthor{Constance Beguier}{owkin}
\mlsysauthor{Mathieu Andreux}{owkin}
\mlsysauthor{Eric W. Tramel}{owkin}
\end{mlsysauthorlist}

\mlsysaffiliation{owkin}{Owkin Inc., New York, USA}

\mlsyscorrespondingauthor{Constance Beguier}{constance.beguier@owkin.com}

\mlsyskeywords{Machine Learning, Federated Learning, Secure Multiparty Computation}

\vskip 0.3in

\begin{abstract}
Federated Learning enables one to jointly train a machine learning model across distributed clients
holding sensitive datasets.
In real-world settings, this approach is hindered
by expensive communication and privacy concerns.
Both of these challenges have already been addressed individually, resulting in competing optimisations.
In this article, we tackle them simultaneously for one of the first times.
More precisely, we adapt compression-based federated techniques to additive secret sharing,
leading to an efficient secure aggregation protocol, with an adaptable security level.
We prove its privacy against malicious adversaries and its correctness in the semi-honest setting.
Experiments on deep convolutional networks demonstrate that
our secure protocol achieves high accuracy with low communication costs.
Compared to prior works on secure aggregation, our protocol has a lower communication and computation costs
for a similar accuracy.
\end{abstract}

]



\printAffiliationsAndNotice{}  

\section{Introduction}
Machine learning (ML) requires the collection of
large volumes of data in order to train robust predictive models.
In some healthcare applications, e.g. for rare diseases, this data collection
necessarily involves data stemming from different locations.
However, due to data sensitivity,
it may be forbidden or extremely difficult to centrally collect them.
Federated Learning (FL) introduced in~\cite{shokri2015privacy, FL_MMRA16}
is an approach to train an ML model that benefits from multiple datasets
while keeping training data in place.
FL algorithms typically iterate training rounds, during which model updates
are usually aggregated after local training steps.
While the approach is promising,
two major challenges hinder the large-scale
adoption of FL techniques in real-world use-cases:
privacy concerns and expensive communication~\cite{FL_chaellenges_LSTS20}.

Although leaving data at its source is a significant improvement for data privacy,
sharing intermediate model updates
indirectly leaks sensitive information~\cite{privacy_BDFK18,privacy_CLKE18,privacy_MSCS19}.
A standard protection technique is secure aggregation~\cite{PracticalSA_GIKM17,EaSTLy_DCSW20}.
With this approach, only the aggregated models are revealed,
while the local intermediate models are kept private,
which reduces the effectiveness of attacks.
Secure aggregation is often based on either Secure Multiparty Computation (SMC) protocols~\cite{SMC_EKR18}
or Homomorphic Encryption (HE)~\cite{HE_AAUC17}.
The main bottleneck of these techniques
is the additional computation and communication costs,
which amplify the burden of communication costs in FL.

Due to its distributed nature, communication costs are significant in FL.
In order to mitigate communication overhead,
multiple neural network update compression techniques
have been introduced in the last few years
\cite{Sto2015, SparseBinaryCompression_SWMS18, DoubleSqueeze_TYLZ19}.
These techniques demonstrate that only the most significant bits
of model updates' components are required to obtain a good predictive model
and very few of these components carry significant information.
It permits one to greatly reduce the communication costs
without compromising the final model accuracy.
Unfortunately, these techniques are not directly compatible
with efficient secure aggregation protocols,
which limits their practical impact.

In this paper, instead of addressing each challenge separately,
which leads to competing optimisations, we investigate them simultaneously,
for one of the first times.
Our main contribution is to introduce a set of new efficient protocols
for secure aggregation in collaborative FL (see Sec.~\ref{sec:proposed-method}),
which permits to achieve a good tradeoff between privacy and communication costs
according to the security level required in each practical use case.
These protocols combine neural network update compression techniques
\cite{Sto2015, SparseBinaryCompression_SWMS18, DoubleSqueeze_TYLZ19}
with additive secret sharing \cite{SecretSharingBook_CDN2015}.
They are provably private against malicious adversaries when at most all servers except one collude
(see Appendix~\ref{appendix:privacy-proof}),
have a low computation cost by reducing the secure aggregation
to the secure evaluation of only additions
(see Sec.~\ref{subsec:secure_aggregation}), and
reduce its communication cost thanks to update compression techniques
as demonstrated in Sec.~\ref{subsec:total_comm_cost}.
Our experiments on deep convolutional neural networks in MNIST and CIFAR-10
prove that our protocols obtain similar accuracy than
non-secure uncompressed FL training
with a lower communication cost (see Sec.~\ref{sec:experiments}).
Compared to secure aggregation techniques~\cite{PracticalSA_GIKM17,EaSTLy_DCSW20},
our protocols have a lower computation and communication costs for a similar accuracy.

\section{Background}
\label{sec:background}

\subsection{Notation}
In this paper, $[a,b]$ represents the set of integers $\{a, a+1, ..., b\}$.
$\mathbb{Z}_m$ (resp. $\mathbb{Z}_m^*$) is isomorphic to $[0, m-1 ]$ (resp. $[1, m-1]$).
The space of vectors of length $n$ whose components belong to $\mathbb{Z}_m$
is represented by $\mathbb{Z}_m^n$.
The notation $x \in_R S$ indicates that $x$ is sampled uniformly at random from the finite set $S$.
When $x \in_R \mathbb{Z}_m ^n$, all coordinates of $x$ are sampled independently and uniformly at random from $\mathbb{Z}_m$.
The notation $S \backslash i$ represents the set $S$ deprived of the element $i$:
$S \backslash i = \{k \in S~|~ k \neq i\}$.

We also introduce some notations related to the federated system.
Let $C$ denote the number of clients jointly training a neural network, which consists of $N$ trainable parameters.
Thus, the length of the vector of udpates is equal to $N$.
The clients jointly train this network during $R$ federated rounds.
In each federated round, each client performs $E$ local updates before aggregation.

\subsection{Federated Learning}

\subsubsection{Collaborative or Cross-Silo Setting}
Many current FL works such as~\cite{BEG2019,LLHJ2019}
study massive-scale use cases where millions of different devices participate to the training.
In contrast, in this manuscript, we focus on collaborative FL applied to medical use cases.
In this context, a small number of clients (generally less than 10) takes part in the federated training.
Clients are already able to obtain useful trained models from only their own local dataset
(which is generally moderately large),
but they would like to augment their performance through collaboration.
Unlike the massive-scale use-case, all clients are able to participate in each round with 
robust connectivity to the system.

\subsubsection{Federated Averaging~\cite{FL_MMRA16}}
In the studied FL setting, the goal is to obtain a \emph{single} trained model from some datasets
held by $C$ clients, $\mathcal{D}_i~\forall i \in [1,C]$, without moving
these datasets to a single location.
The most commonly used strategy in FL is Federated Averaging (\texttt{FedAvg}) \cite{FL_MMRA16}.
In this algorithm, within each such \emph{federated round}, indexed by $r \in [1,R]$,
each client $i$ is provided with the current global model state $\theta^{(r)}$,
performs $E$ local optimization steps $\text{Opt}_E$ to minimize its local loss $\mathcal{L}_i$,
and evaluates its local update $U_i^{(r)}$:
\begin{subequations}\label{eq:fl_training}
\begin{align}
    &\theta_i^{(0)} \gets \theta^{(r)} && \text{(Local Initial State),} \\ 
    &\theta_i^{(E)} \gets \text{Opt}_E(\theta_i^{(0)}, \mathcal{D}_i) && \text{(Local Training),}\\
    &U_i^{(r)} \gets \theta_i^{(E)} - \theta_i^{(0)} && \text{(Local Update)}
\end{align}
\end{subequations}
Then the local updates are transmitted to a central \emph{aggregator}
which maintains and updates the global model state, 
\begin{equation}\label{eq:aggregation}
\begin{aligned}
    \theta^{(r+1)} \gets  \theta^{(r)} + \sum_{i=1}^{C}\frac{|\mathcal{D}_i|}{D}~U_i^{(r)}, && \text{(Aggregate \& Update)}
\end{aligned}
\end{equation}
where $D=\sum_{i=1}^C |\mathcal{D}_i|$ is the total number of data samples across all datasets.
Tens to thousands federated rounds are computed in sequence until the global model state converges or 
meets some desired performance metric.

When multiple local update steps are used ($E>1$),
the convergence of \texttt{FedAvg} is not guaranted even in the convex setting
due to data heterogeneity~\cite{heterogeneity_SLSZ18,SCAFFOLD_KKMRSS19}.
The trade-off for \texttt{FedAvg}, then, is to select a large enough $E$ so as to reduce
the number of federated rounds required (and therefore communication overhead) to
reach a desired performance metric, but not so large that the algorithm fails to converge.

\subsubsection{Communication-Efficient FL}
Over the past few years, many model update compression techniques such as
\cite{SFD2014, Sto2015, SparseBinaryCompression_SWMS18, DoubleSqueeze_TYLZ19}
have shown that it is possible to greatly reduce the communication costs of distributed training
while retaining the predictive accuracy of the final model.
In~\cite{SFD2014}, gradients are encoded using only the direction (sign) of each coefficient.
This technique is further enhanced in~\cite{Sto2015}
where the authors demonstrate that very few of these components carry significant information, and thus many can be discarded,
resulting in a so-called \emph{ternary} coding of the model update~\cite{TernGrad_WXYW17}.
Both of these approaches make use of an \emph{error compensation} (EC) scheme to stabilize training.
EC has been studied analytically in~\cite{ErrorFeedback_KRSJ19,SparsifiedSGD_SCJ18},
where it was proven that this technique achieves the same theoretical convergence guarantees as standard training.
Recent works such as~\cite{SparseBinaryCompression_SWMS18, DoubleSqueeze_TYLZ19}
have showcased the utility of such model update compression schemes in practical federated settings.

\subsection{Additive Secret Sharing}
\label{subsec:SMC}

Secure Multiparty Computation (SMC) is a cryptography subfield which studies techniques allowing several parties to jointly
evaluate a function over their inputs while keeping these inputs private. 
Many protocols exist in this field, such as Yao garbled circuit \cite{Yao86} and secret sharing \cite{SS_RST79}.
These protocols have different trade-offs between the computation and communication costs.

For our secure aggregation, we choose to only rely on Additive Secret Sharing~\cite{SecretSharingBook_CDN2015},
owing to its low computation costs.
Its communication cost is proportional to the number of multiplications to evaluate.
To obtain an efficient secure aggregation, we adapt our aggregation such that
only additions need to be securely evaluated.

In this section, we review the Additive Secret Sharing protocol in Sec.~\ref{subsubsec:ASS},
as well as its communication and computation costs in Sec.~\ref{subsubsec:ASS_comm_comp_costs},
and justify its security aspects in Sec.~\ref{subsubsec:ASS_security}.

\subsubsection{Additive Secret Sharing~\cite{SecretSharingBook_CDN2015}}
\label{subsubsec:ASS}
Alg.~\ref{algo:secure-sum} presents in details
how to securely evaluate the sum of integers
held by~$C$ clients, with additive secret sharing,
thanks to the help of $S$ servers ($S \geq 2$).
In this protocol, each client possesses a vector $X_i \in \mathbb{Z}_m^n$,
and desires to know the sum of all client vectors,
$X = \sum_{i=1}^C X_i \mod m$,
but no client wants to reveal its own vector to other parties.
During the evaluation, inputs, intermediate results and outputs are held by 
$S$ servers in the form of secret shares.

\begin{algorithm}[h!]
\caption{SecureSum Protocol}\label{algo:secure-sum}
\begin{algorithmic}
\STATE {\bfseries Input:} Each client has a vector $X_i \in \mathbb{Z}_m^n$
\STATE {\bfseries Locally on each Client $i \in [1,C]$}
    \STATE \hspace{1em} \texttt{// Create \& Transmit Shares}
    \STATE \hspace{1em} $R_{i,j} \in _R \mathbb{Z}_m^n  \quad \forall j \in [1, S-1] $
    \STATE \hspace{1em} $R_{i, S} \gets X_i - \sum_{j=1}^{S-1} R_{i,j} \mod m$
    \STATE \hspace{1em} Transmit $R_{i,j}$ to $\text{Server } j$,  $\forall j \in [1, S]$
\STATE {\bfseries Locally on each Server $j \in [1,S]$}
    \STATE \hspace{1em} \texttt{// Aggregate received shares}
    \STATE \hspace{1em} $R_j \gets \sum_{i=1}^C R_{i,j} \mod m$
    \STATE \hspace{1em} Broadcast $R_j$ to all clients
\STATE {\bfseries Locally on each Client $i \in [1,C]$}
    \STATE \hspace{1em} \texttt{// Aggregate received shares}
    \STATE \hspace{1em} $Z \gets \sum_{j=1}^S R_j \mod m$
\STATE {\bfseries Output:} $\begin{cases}
                \text{Clients: } Z = \sum_{i=1}^C X_i \mod m\\
                \text{Servers: } \varnothing \end{cases}$
\end{algorithmic}
\end{algorithm}

The SecureSum protocol does not require that the parties playing the
role of the server are mutually exclusive from the client parties.
No matter the architecture, the proposed protocol will only reveal
to the clients the result of the secure function evaluation, with
the caveat that if a client is \emph{also} a server, then this party will
observe the result by virtue of being a client itsef.

\subsubsection{Communication and Computation Costs}
\label{subsubsec:ASS_comm_comp_costs}
Compared to other SMC protocols such as Yao garbled circuit~\cite{Yao86} and GMW~\cite{GMW87},
no expensive cryptographic operations are used in the SecureSum protocol.
All parties have only to evaluate additions modulo $m$.
Clients have an additional burden to produce $S-1$ random vectors in $\mathbb{Z}_m^n$,
but this additional computational cost is also minimal.
Thus, the computation cost of the SecureSum protocol is very low.

About the communication cost,
each client sends to each server a single vector in $\mathbb{Z}_m^n$ of
$n \cdot \lceil \log_2 m \rceil$ bits, and then,
each server sends back to each client one vector in $\mathbb{Z}_m^n$
of $n\cdot \lceil \log_2 m \rceil$ bits.
Thus, the \emph{total} communication cost required to execute this
protocol is
\begin{equation}
    \label{eq:secure_sum_comm_cost}
    2 \cdot S \cdot C \cdot n \cdot \lceil \log_2 m \rceil \text{ bits}.
\end{equation}

\subsubsection{Security Definitions}
\label{subsubsec:ASS_security}
The security of an SMC protocol depends on the assumptions that are made on adversaries.
In this paper, we only take into account \emph{semi-honest} or \emph{malicious}, \emph{static} adversaries.
A \emph{Semi-honest adversary} follows the protocol but tries to infer
as much information as possible from the observed messages.
On the contrary, a \emph{Malicious adversary} can use any kind of strategy to learn information,
including sending fallacious messages.
A \emph{Static adversary} selects the parties to corrupt at the beginning of the protocol execution.
This set of corrupted parties is fixed for the whole protocol execution.

An SMC protocol is determined to be secure if it is both correct and private.
A protocol is \emph{correct} if the output of the protocol is correct for the desired secure operation.
A protocol is \emph{private} if each party will learn their output and nothing else, except what they can infer from
their own input and their own output.

\begin{theorem}
The SecureSum protocol (see Alg.~\ref{algo:secure-sum}) is correct against semi-honest adversaries
and private against  a malicious adversary controlling at most $S-1$ servers and any number of clients.
\end{theorem}
\begin{proof}
It is well known that the SecureSum protocol is correct
and private against semi-honest adversaries~\cite{SecretSharingBook_CDN2015}.
We demonstrate in Appendix~\ref{appendix:privacy-proof} that this protocol is private
against a malicious adversary controlling at most $S-1$ servers and any number of clients.
\end{proof}

It means that a malicious adversary cannot learn any information about the input data of honest parties,
but it can modify messages such that the output is incorrect.
In collaborative FL use cases, 
data owners benefit from the trained global model
and therefore do not want to reduce their own performance.
Therefore, semi-honest correctness is sufficient for our use case.
In addition, all participants have an incentive to try to obtain information
about data held by other participants.
Hence, our protocol must be private against malicious adversaries.

\subsection{Fixed-Point Representation}
While ML models are very often parameterized by floating-point values,
SMC operations are applied in a finite set (e.g. $\mathbb{Z}_m$).
Thus, a fixed-point representation is required to convert these real values into the finite set.
In our secure aggregation protocol (see Sec.~\ref{subsec:secure_aggregation}),
we would like to securely add only \emph{non-negative} real values $x \in \mathbb{R}^+$.
Thus, we adapted the fixed-point representation presented in~\cite{FixedPoint_CDNR20}
to only \emph{non-negative} real values, $Q: \mathbb{R}^+ \rightarrow \mathbb{Z}_{2^{\lambda}}$:
\begin{equation}
    Q(x) = \lfloor 2^a \cdot x \rfloor \mod 2^{\lambda}.
\label{eq:fixed-point-represention}
\end{equation}
In practice, all values $x$ (inputs, intermediate results and outputs) belong to a finite interval.
The integer $\lambda$ is selected according to the wanted precision.
To avoid overflow issues, the integer $a$ is set according to the largest value resulting
from any step in the secure computation. 

\section{Proposed Method}
\label{sec:proposed-method}

In this paper, we focus on collaborative FL, a context often
encountered in medical applications of federated techniques.
The aim of this work is to present a secure FL technique which answers 
to the following four design constraints:
\begin{itemize}
    \item[\emph{i})] 
        \textbf{Perform the model update aggregation in a secure way}. 
        Nothing will be revealed to the aggregation servers.
        Only the clients will learn the aggregated updates, and thus whatever
        information can be inferred from these updates in conjunction with their
        local data and model updates.
        Specifically, our protocol must be:
        \begin{itemize}
            \item[\emph{a})] 
                \textbf{Private in malicious settings.} 
                Clients and servers may deviate from the defined protocol
                to obtain information about other clients data.
            \item[\emph{b})] 
                \textbf{Correct in semi-honest setting.} 
                Participants have an incentive to behave correctly
                in order to obtain a trained model with better 
                performance than their local model. 
        \end{itemize}
    \item[\emph{ii})] 
        \textbf{No compromise on accuracy.} 
        Our method must obtain a similar accuracy than 
        non-secure uncompressed \texttt{FedAvg}.
    \item[\emph{iii})] 
        \textbf{Computation Efficient.} 
        Our method must not use expensive cryptographic operations.
    \item[\emph{iv})] 
        \textbf{Communication Efficient.} 
        Our method must have a smaller communication cost 
        than non-secure uncompressed \texttt{FedAvg}
        and state of the art secure aggregation~\cite{PracticalSA_GIKM17,EaSTLy_DCSW20}.
\end{itemize}

In this section, we will first describe a communication-efficient FL algorithm
which is amenable to efficient secure aggregation in 
Sec.~\ref{subsec:comm-efficient-fl}.
Then, in Sec.~\ref{subsec:secure_aggregation},
we will present our efficient secure aggregation protocol,
whose the main block is the SecureSum protocol (see Alg.~\ref{algo:secure-sum}).
Our proposed method is summarized in Alg.~\ref{alg:smc-friendly-fl}. 

\begin{algorithm}[h!]
\caption{SMC-friendly Compressed Update FL}\label{alg:smc-friendly-fl}
\begin{algorithmic}
\STATE {\bfseries Input:}
    Number of rounds $R$, 
    number of local updates $E$, 
    initial global model state $\theta^{(0)}$, 
    local datasets $\mathcal{D}_i$,
    desired compression factor $\rho$.
\STATE {\bfseries Initialization (Error accumulator and model states)}
\STATE \hspace{1em} $\delta_i^{(0)} \gets \mathbf{0}, \quad \theta_i^{(0)} \gets \theta^{(0)},  \quad \forall i \in [1,C]$
\FOR{$r=0, 1, ..., R-1$}
    \STATE {\bfseries Locally on each Client $i \in [1,C]$}
    \STATE \hspace{1em} $U_i \gets {\rm Opt}_E\left(\theta_i^{(r)}, \mathcal{D}_i \right)  - \theta_i^{(r)}$ \hfill Eq. (\ref{eq:fl_training})
    \STATE \hspace{1em} $(\alpha_i, D_i) \gets \mathtt{TopBinary}(U_i + \delta_i^{(r)})$ \hfill Eq. (\ref{eq:TopBinary})
    \STATE \hspace{1em} $\delta_i^{(r+1)} \gets \delta_i^{(r)} + U_i - \alpha_i \cdot D_i$
    \STATE {\bfseries Via SMC between clients and servers}
    \STATE \hspace{1em} $D \leftarrow \mathtt{SecureSum}(D_1, \dots, D_C)$ \hfill Sec.~\ref{subsubsec:secure-unions}, \ref{subsubsec:secure-sign-sum}
    \STATE \hspace{1em} $\alpha \leftarrow \mathtt{SecureSum}(\alpha_1, \dots, \alpha_C)$ \hfill Sec. \ref{subsubsec:secure-factor-sum}
    \STATE {\bfseries Locally on each Client $i \in [1,C]$}
    \STATE \hspace{1em}$ \theta_i^{(r+1)} \gets \theta_i^{(r)} + \frac{1}{C^2} \alpha \cdot  D$ \hfill Eq. (\ref{eq:aggregation}) with Eq. (\ref{eq:separate_aggregation})
\ENDFOR
\end{algorithmic}
\end{algorithm}

\subsection{Communication-Efficient FL}
\label{subsec:comm-efficient-fl}

\subsubsection{\texttt{TopBinary} Coding}
Our method is based on the compression of the updates sent from clients to the aggregation server
with an error compensation to ensure convergence \cite{Sto2015, ErrorFeedback_KRSJ19, SparseBinaryCompression_SWMS18}.
For the compression method, we used a combination of the 
Top-$k$ sparsification~\cite{SparsifiedSGD_SCJ18} as well as 1-bit quantization~\cite{SignSGD_BWAA18}
which we note here as \texttt{TopBinary} coding.
The Top-$k$ sparsification jointly compresses a vector $X$ 
by retaining only the $k$ components with largest magnitude. 
More precisely, let $X \in \mathbb{R}^N$, then 
\begin{equation}
    \forall i \in [1,N] \text{, Top-}k(X)[\pi_i] =
    \begin{cases}   
        X[\pi_i], & \text{if}~ \pi_i \leq k,\\
        0, & \text{otherwise}.
    \end{cases}
\end{equation}
where $\pi$ is a sorting permutation of $[1, N]$ such that $\forall i \in [1, N-1]$, 
$| X[\pi_i] | \geq | X[\pi_{i+1}] |$. 
In order to compare same relative compression rate between models of varying architectures,
we will introduce the term $\rho \triangleq k / N$,
which is interpreted as a desired model update sparsity level.

To further reduce communication costs, we also employ one-bit quantization
to the $k$ significant model update coefficients \cite{SparseBinaryCompression_SWMS18,Sto2015}.
This scalar quantization maps each non-zero element of a vector $X$ 
to the binary set $\{-\alpha, \alpha\}$ according to their signs. 
More precisely, we define our \texttt{TopBinary} coder as
\begin{equation}\label{eq:TopBinary}
    \texttt{TopBinary}(X; k) = \alpha \cdot \underbrace{\text{sgn}\circ\text{Top-}k(X)}_{D},
\end{equation}
where 
$\alpha \triangleq \frac{|| X ||_2}{||\text{sgn}\circ\text{Top-}k(X)||_2} = \frac{1}{\sqrt{k}} ||X||_2$
is a scaling factor used to preserve the Euclidean norm of $X$. 
We note that the implementation of the proposed secure protocol does not depend tightly
on this construction of $\alpha$, and other definitions~\cite{OneBit_SFDL14,SparseBinaryCompression_SWMS18}
could be chosen, as well. 

The \texttt{TopBinary} coded vector can be split into two components: the scalar 
factor $\alpha$ and the vector of signs $D = \text{sgn} \circ \text{Top-}k(X)$. Furthermore, 
when $\rho$ is small, the ternary-valued $D$ can instead be represented by two 
$k$-length vectors, namely, the list of non-zero indices, $V = [\pi_1, \dots, \pi_k ]$,
and the signs of the coefficients at those non-zero locations,
$(D)_{|V} = [D[\pi_1], \dots, D[\pi_k] ]$. 

\subsubsection{Separate Aggregation}
Let $\alpha_i \cdot D_i$ the updates of the client $i$
obtained with the \texttt{TopBinary} encoding.
Usually a \emph{direct aggregation} (\texttt{DirectAgg}) is performed and
the aggregated updates is defined as
\begin{equation}
\begin{aligned}
\frac{1}{C} \sum_{i=1}^C \alpha_i D_i  && \text{(DirectAgg).}
\end{aligned}
\end{equation}
Unfortunately, the secure evaluation of this \texttt{DirectAgg}
requires the secure evaluation of multiplications,
which is costly compared to the secure evaluation of additions.
To have an efficient protocol,
we would like to securely evaluate \emph{only additions}.
Thus, we suggest an alternative to aggregate compressed model updates called 
\emph{separate aggregation} (\texttt{SepAgg}).
This approach separately aggregates the scaling factors $\alpha_i$ and the vector of signs $D_i$
as follows:
\begin{equation}
\begin{aligned}
    U = \frac{1}{C^2} \left(\sum_{i=1}^C \alpha_i\right) \left(\sum_{i=1}^C D_i\right) && \text{(SepAgg).}
\end{aligned}
\label{eq:separate_aggregation}
\end{equation}
We contrast these two approaches in Appendix~\ref{appendix:separate-aggregation},
where we demonstrate that when the norm of each original vector~$||X_i||_2$ is close
to the average norm~$\frac{1}{C}\sum_i ||X_i||_2$,
then the \texttt{SepAgg} is an adequate estimation of the \texttt{DirectAgg}.

With \texttt{SepAgg}, the convergence of the federated training is not proved. 
However in our experiments, federated training with this \texttt{SepAgg} provides
 similar trained model predictive performance than \texttt{FedAvg} training with \texttt{DirectAgg}.
The main advantage of this \texttt{SepAgg} is to be SMC-friendly
without compromising accuracy.

\subsection{Compressed Secure Aggregation Protocol}
\label{subsec:secure_aggregation}

\begin{table*}[!t]
    \small
    \centering
    \begin{tabular}{lcll}
        \toprule
        Protocol & Communication Cost (bits)& Clients Learn (besides $V$)  & Servers Learn \\
        \midrule
        \texttt{PlaintextUnion} & $2 \cdot C \cdot N$ & Nothing & Clients' list of indices $V_i$\\
        \texttt{PartialSecUnion} & $2 \cdot S \cdot C \cdot N \cdot \lceil \log_2 (C+1) \rceil $ & How many clients have selected each index & Nothing\\
        \texttt{SecUnion} & $2 \cdot S \cdot C \cdot N \cdot q$ & If they alone selected each index & Nothing\\
        \bottomrule
    \end{tabular}
    \caption{Comparison of secure union protocols for the proposed secure aggregation.}
    \label{tab:comp-secure-union}
\end{table*}

In this section, we explain how the SecureSum algorithm (Alg.~\ref{algo:secure-sum}) is used as a building block
to construct a secure protocol for our sparse aggregation protocol
described in Sec.~\ref{subsec:comm-efficient-fl}.
The problem is the following: at the aggregation step, each client $i$ has a vector $X_i = \alpha_i D_i \in \mathbb{R}^N$
and one would like to securely evaluate $U = \texttt{SepAgg}(X_1, \dots, X_C)$.
Our proposed secure aggregation protocol can be described in three steps:
\begin{align}
    &V \gets \bigcup_{i=1}^C V_i = \bigcup_{i=1}^C \{k \in [1,N]~|~D_i[k] \neq 0\} \tag{Union of Indices}\\
    &D \gets \sum_{i=1}^C (D_i)_{|V} \tag{Sum of Signs}\\
    &\alpha \gets Q^{-1} \left(\sum_{i=1}^C Q(\alpha_i) \right) \tag{Sum of Factors}
\end{align}
Finally, each client can localy evaluate $U = \frac{1}{C^2} \cdot \alpha \cdot D$.
During this protocol, each client learns $V, D, \alpha$ and $U$ and the servers learn nothing.

In the next sections, we will detail
why the intermediate steps of this protocol are secure (Sec.~\ref{subsubsec:revealing-info}),
then we will provide a number of different approaches for the secure union (Sec.~\ref{subsubsec:secure-unions}),
the protocol for a secure sum of sign vectors (Sec.~\ref{subsubsec:secure-sign-sum}),
and finally the protocol for the secure sum of scalar factors (Sec.~\ref{subsubsec:secure-factor-sum}).

\subsubsection{Security of Intermediate Outputs}
\label{subsubsec:revealing-info}
At the end of the secure aggregation round,
each client obtains the separate aggregated updates $U$ 
(see Eq.~\ref{eq:separate_aggregation}).
In this section, we will prove that from $U$,
each client can easily infer the intermediate outputs $V$, $D$ and $\alpha$.
Thus, revealing to the clients these intermediate outputs over the course of the protocol
does not provide any \emph{additional} information than what they could learn simply from
their knowledge of their own inputs and result of the secure protocol at its completion. 
Hence, the proposed stepwise approach is secure as long as its individual steps are secure.

From $U$, the client can infer the union of the list of indices via
$\tilde{V} = \{k \in [1,N]~|~U[k] \neq 0\}$,
where $\tilde{V} \subset V = \bigcup_{i=1}^C \{k \in [1, N]~|~D_i[k] \neq 0 \}$ and 
$V  \backslash \tilde{V} = \{k \in V~|~\sum_{i=1}^C D_i[k] = 0\}$.
Then, the client can deduce the sum of the scalar factors via
$\widetilde{\alpha} = C^2 \cdot \min_{k \in  V}~\left|U[k]\right|$,
where~$\widetilde{\alpha} = \alpha$ if at least one index of the sum of the signs 
is equal to~$-1$ or~$1$, which is the case when one index has been selected by 
exactly one client, an occurrence we expect to be highly likely.
Finally, the client can obtain the sum of the signs
via~$D = \left(\frac{C^2}{\alpha} \right)U$.

\subsubsection{Secure Union Protocols}
\label{subsubsec:secure-unions}
Now, we will define a set of protocols,
each with a different secure perimeter and communication cost trade-off (see Table~\ref{tab:comp-secure-union}),
for finding the union of indices.
In each protocol, each client $i$ has a list of indices $V_i$ whose components 
belong to $[1,N]$.
The clients would like to securely evaluate the union 
of the lists of indices $V = \bigcup_{i=1}^C V_i$.

\paragraph{\texttt{PartialSecUnion}}
In this protocol, 
each client will create a Boolean vector $B_i$ representing their list of indices $V_i$ such that $\forall k \in [1, N],$
\begin{equation}
    B_i[k]=\begin{cases}
        1, & \text{ if } k \in V_i, \\
        0, & \text{ otherwise }.
    \end{cases}
\end{equation}
Using this representation, the clients can securely calculate the union of indices through
the use of Alg.~\ref{algo:secure-sum} by evaluating the sum of their Boolean vectors $B_i$
in $\mathbb{Z}_{C+1}^N$.
At the end of the secure sum evaluation, each client will obtain $B=\sum_{i=1}^C B_i \mod (C+1)$,
from which they may easily infer the union of indices $V$ by 
selecting the positions of non-zero values in $B$
and the servers learn nothing.
In addition, $B$ reveals to the clients \emph{how many} clients have non-zero values at a
given index, however it does not reveal \emph{which} clients.

\paragraph{\texttt{SecUnion}}
Here, each client will create a vector $A_i$ representing
its list of indices $V_i$ such that $\forall k \in [1, N],$ 
\begin{equation}
    A_i[k]=\begin{cases}
        R_{i,k}, & \text{ if } k \in V_i \text{ where } R_{i,k} \in_R \mathbb{Z}_{2^q}^*, \\
        0, & \text{ otherwise }.
    \end{cases}
\end{equation}
Subsequently, the clients will securely evaluate the sum of the vectors $A_i$ in $\mathbb{Z}^N_{2^q}$
using Alg.~\ref{algo:secure-sum}.
At the end of the secure evaluation, each client will obtain $A=\sum_{i=1}^C A_i \mod 2^q$
from which each client can easily infer the union of indices by 
selecting the positions of non-zero values. 

When using the \texttt{SecUnion} protocol, 
the resulting union of indices may have some false negatives.
Specifically, when more than two clients have the same index, the sum
of random values $R_{i,k}$ will be equal to $0$ with probability~$1 / 2^q$,
causing $k$ to be omitted from $V$ when it should be present.
Increasing $q$ can reduce the occurrence of such false negatives,
but at the cost of increased communication cost.
We refer to Appendix~\ref{appendix:secure_union} for an analysis of the average amount of false negatives.
We note that, in experimentation (see Tab.~\ref{tab:all_xps}),
the training with our \texttt{SecUnion} protocol
is observed empirically to be quite robust to such dropped support. 

In terms of information leakage, if one client observes that the component 
$A[k]$ in the sum result is equal to 
their random component $R_{i,k}$, then the client can infer that with very high 
probability he is alone to have this index $k$.
We refer to Appendix~\ref{appendix:secure_union} for an analysis of this probability.

\begin{table*}[!t]
\centering
\small
\setlength\tabcolsep{-2pt}
\begin{tabular}{lcc}
\toprule
Protocol & Sec. Agg. & Communication Cost per Federated Round (bits) \\
\midrule
Uncompressed \texttt{FedAvg} & \xmark & $2 \cdot C \cdot 32 \cdot N$ \\
\texttt{FedAvg} w/ \texttt{TopBinary} \& \texttt{DirectAgg} &
\xmark & $C \cdot (32 \cdot (1 + |V|)  + 2 \cdot N + \lfloor \rho \cdot N \rfloor)$ \\
\texttt{FedAvg} w/ \texttt{TopBinary} \& \texttt{SepAgg} &
\xmark & $C \cdot (2 \cdot 32 + 2 \cdot N + \lfloor \rho \cdot N \rfloor + |V| \cdot \lceil \log_2(2C+1) \rceil)$ \\
\midrule
Our method w/ \texttt{NoUnion} & \cmark &
$\left(2 \cdot S \cdot C \cdot N \cdot \lceil \log_2 (2C+1) \rceil \right)
+ \left(2 \cdot S \cdot C \cdot 32 \right)$\\
Our method w/ \texttt{SecUnion} & \cmark &
$\left(2 \cdot S \cdot C \cdot N \cdot q \right)
+\left(2 \cdot S \cdot C \cdot |V| \cdot \lceil \log_2 (2C+1) \rceil \right)
+ \left(2 \cdot S \cdot C \cdot 32 \right)$\\
Our method w/ \texttt{PartialSecUnion} & \cmark &
$\left(2 \cdot S \cdot C \cdot N \cdot \lceil \log_2 (C+1) \rceil \right)
+\left(2 \cdot S \cdot C \cdot |V| \cdot \lceil \log_2 (2C+1) \rceil \right)
+ \left(2 \cdot S \cdot C \cdot 32 \right)$\\
Our method w/ \texttt{PlaintextUnion} & \cmark &
$\left(2 \cdot C \cdot N \right)
+\left(2 \cdot S \cdot C \cdot |V| \cdot \lceil \log_2 (2C+1) \rceil \right)
+ \left(2 \cdot S \cdot C \cdot 32 \right)$\\
\midrule
\cite{PracticalSA_GIKM17} & \cmark & $C \cdot \left( 2 C \cdot a_K + (5 C-4)a_S + 2N \lceil \log_2 T \rceil\right)$ \\
\cite{EaSTLy_DCSW20} with TSS & \cmark & $2 S C N \lceil \log_2 (2C + 1) \rceil$ \\
\cite{EaSTLy_DCSW20} with HE & \cmark & $4 C N (prec + pad)$ \\
\bottomrule
\end{tabular}
\caption{Communication costs per federated round for the different protocols
where $|V|$ is the size of the union of indices,
for \cite{PracticalSA_GIKM17}, $a_K = a_S = 256$ are the number of bits in a public key and in an encrypted share,
all operations are evaluated in $\mathbb{Z}_T$ with $T=2^{32}$,
for \cite{EaSTLy_DCSW20}, $prec = 24$ and $pad = 8$ are HE parameters used to avoid overflows.
\label{tab:comm-cost-estimations}}
\end{table*}

\paragraph{\texttt{PlaintextUnion}}
Here, the clients will only learn the union of indices, however
the first server will learn the list of indices of each client.
In this protocol, each client sends to the first server
the Boolean vector $B_i$ representing their list of indices.
The first server evaluates the union of indices by OR-ing the 
received Boolean vectors and transmitting back the resulting
Boolean vector to each client.

While such a protocol is not advisable for every setting, especially for
NLP tasks or recommender systems, where the knowledge of support locations
can be indicative of training features, 
the effectiveness of many white-box privacy attacks on distributed ML systems 
has only been demonstrated for exact, real-valued model updates, and not for
model update support. 
In the case of computer-vision models, such a security perimeter may
be acceptable to the system designer.

\paragraph{\texttt{NoUnion}}
If the compression rate is very low, the split of the sum of signs into a
union of indices and a sum on restricted signs vectors is not beneficial.
In this case, it is more efficient to perform the secure aggregation in two steps:
i) the secure sum of signs vectors~$D_i$ directly, as~$V=[1, N]$, and
ii) the secure sum of scalar factors.

\subsubsection{Secure Sum of Signs}
\label{subsubsec:secure-sign-sum}
In the second step of our compressed secure aggregation protocol,
each client restricts its dense vector of ternary values $D_i$
to the support resulting from the secure union of indices $V$, and
then together they securely evaluate the sum of these binary vectors of signs, $(D_i)_{|V}$.
More precisely, the parties would like to securely evaluate the sum of 
$C$ vectors of length equal to the size of
the union of indices~$|V|$, each component belongs to $\{-1, 0, 1\}$.
The resulting vector will be a vector of size $|V|$ 
whose components belong to~$[-C, C]$.

For the secure sum of signs,
first, each client will convert each component of its restricted 
vector from~$[-C, C]$ to~$[0, 2C]$ via 
\begin{equation}
    \text{T}(x) = \begin{cases}
    x, & \text{ if } x \in [0, C], \\
    x + (2C+1), & \text{ if } x \in [-C,-1].
    \end{cases}
\label{eq:signs-convert}
\end{equation}
Subsequently, the clients will securely evaluate the sum of these converted vectors in $\mathbb{Z}_{2C+1}$
using the secure sum protocol of Alg.~\ref{algo:secure-sum}.
Finally, each client will convert the components of the result vector from $[0, 2C]$ back to $[-C, C]$ via
\begin{equation}
    \text{T}^{-1}(x) = \begin{cases}
    x, & \text{ \vspace{-2em} if } x \in [0, C], \\
    x - (2C+1), & \text{ \vspace{-2em} if } x \in [C+1,2C].
    \end{cases}
\label{eq:signs-inverse-convert}
\end{equation}

\subsubsection{Secure Sum of Factors}
\label{subsubsec:secure-factor-sum}
In the third step of the compressed secure aggregation protocol, 
each client $i$ has a scalar factor $\alpha_i \in \mathbb{R}^+$
and would like to obtain the sum of all factors $\alpha = \sum_{i=1}^C \alpha_i$.
Firstly, each client will convert his scalar factor from $\mathbb{R}^+$
into $\mathbb{Z}_{2^{\lambda}}$ with the fixed point representation
(see Eq.~\eqref{eq:fixed-point-represention}).
When choosing the fixed-point parameters $a$ and $\lambda$,
we must account for potential overflows when summing together the $C$ scalar factors.
In practice, $\lambda$ is fixed to $32$ bits in order to have a reasonable 
precision without unnecessarily increasing the size of the working field. 
Subsequently, $a$ is chosen so as to avoid this potential overflow.

After this representation conversion, the clients will securely evaluate the 
sum of these converted factors in $\mathbb{Z}_{2^{\lambda}}$, using the
secure sum protocol of Alg.~\ref{algo:secure-sum}.
Finally, each client will convert the resulting factor back into the initial space $\mathbb{R}^+$.

\subsection{Communication Costs}
\label{subsec:total_comm_cost}
In this section, we estimate the total communication cost per federated round of
\texttt{FedAvg} without secure aggregation
and of our proposed method with secure aggregation,
as summarized in Table~\ref{tab:comm-cost-estimations}.
We choose a 32-bit floating-point representation for all ``unquantized'' real values.

\subsubsection{Non-secure FedAvg}
For non-secure uncrompressed \texttt{FedAvg}, each client sends its updates to one server ($32 \cdot N$ bits)
and then the server sends the aggregated updates to each client ($32 \cdot N$ bits).
Thus, the total communication cost for one aggregation is equal to 
$2 \cdot C \cdot 32 \cdot N \text{ bits}$.

For non-secure \texttt{FedAvg} with \texttt{TopBinary} compression, 
each client sends to one server its factor~$\alpha$ ($32$ bits), its list of non zero indices~$V$ ($N$ bits)
and the signs of the coefficients at those non-zero locations~$(D)_{|V}$ ($\lfloor \rho \cdot N \rfloor $ bits).
Let~$|V|$ the size of the union of indices~$V$.
With \texttt{DirectAgg}, the server sends to each client the aggregated updates composed of
the list of the non-zero indices ($N$ bits) and the coefficients at those non-zero locations ($32 \cdot |V|$ bits).
For \texttt{SepAgg}, the server sends to each client the aggregated updates composed of
the aggregated factor ($32$ bits) and the aggregated vector of directions ($N + |V| \cdot \lceil \log_2 (2C+1) \rceil$ bits).

\subsubsection{Our method with secure aggregation}

We can easily evaluate the total communication cost in bits of each sub-protocol of our secure aggregation
from Eq. \eqref{eq:secure_sum_comm_cost}:
\begingroup
\allowdisplaybreaks
\begin{align}
    2 \cdot S \cdot C \cdot N \cdot \lceil \log_2 (C+1) \rceil \tag{\texttt{PartialSecUnion}} \\
    2 \cdot S \cdot C \cdot N \cdot q \tag{\texttt{SecUnion}} \\
    2 \cdot C \cdot N \tag{\texttt{PlaintextUnion}} \\
    2 \cdot S \cdot C \cdot |V| \cdot \lceil \log_2 (2C+1) \rceil \tag{\texttt{Sum of Signs}} \\
    2 \cdot S \cdot C \cdot 32 \tag{\texttt{Sum of Factors}}
\end{align}
\endgroup
where $|V|$ is the size of the union of indices $V$.

\section{Related Work}

To date, proposed secure aggregation schemes in FL have relied on both SMC and HE approaches.
Secure aggregation methods based on HE~\cite{Hybrid_TBAS19,BatchCrypt_ZLXW20}
require a single round of communication, but also require computationally expensive cryptographic operations.
On the contrary, secure aggregation methods based on SMC~\cite{PracticalSA_GIKM17,HybridAlpha_XBZA19,SecAgg_BBGL20}
have a lower computational cost, but a higher communication cost.

A secure aggregation protocol based on Threshold Secret Sharing (TSS) is introduced in~\cite{PracticalSA_GIKM17}.
Contrary to our proposed method (see Section~\ref{sec:proposed-method}),
this protocol requires the use of a single aggregation server,
this server learns the aggregated models
and this protocol is resilient against clients dropout.
With respect to computational requirements, this protocol relies on key agreements, 
asymmetric cryptography, and the use of
cryptographically-secure pseudo-random number generators (PRNG). 
These cryptographic operations are computationally expensive and increase the computational burden
for each participating party.
To be resilient against dropouts, in the protocol of~\cite{PracticalSA_GIKM17},
public keys and encrypted shares are routed from one client to another through the central
coordination server, incurring extra communication costs.
More precisely, for one round, the total communication cost is
\begin{equation}
C \cdot \left( 2 C \cdot a_K + (5 C-4)a_S + 2N \lceil \log_2 T \rceil \right) \text{ bits,}
\end{equation}
where $a_K=a_S=256$ are the number of bits respectively in a public key and in an encrypted share
and all operations are evaluated in $\mathbb{Z}_T$ with $T=2^{32}$.
We refer to Appendix~\ref{appendix:PSA-CommCost} for more details about this
communication cost.
The main drawbacks of this approach are its high computation and communication costs and the fact that
aggregated updates are revealed to the aggregator server.

This line of work has been extended by~\cite{SecAgg_BBGL20}.
In~\cite{PracticalSA_GIKM17}, each client speaks to all clients through the central server
to share public keys and encrypted shares.
In~\cite{SecAgg_BBGL20}, they prove that each client is only required
to share public keys and encrypted shares with part of the clients.
That permits to reduce the communication and computation costs
while keeping the privacy guarantees.
For instance, if they have $10^8$ clients, each client is only required to speak to $150$ clients.
Unfortunately, this improvement is not applicable to collaborative setting
where we have less than 10 clients (see Theorem $3.10$ in~\cite{SecAgg_BBGL20}).
In collaborative setting, each client is required to speak with all clients.

To the best of our knowledge, only one proposal combines neural network update 
compression with secure aggregation~\cite{EaSTLy_DCSW20}.
In this work, the authors make use of \texttt{TernGrad}~\cite{TernGrad_WXYW17} 
for model update compression 
and subsequently suggest two methods to perform the secure aggregation:
one based on HE and the second based on TSS.
A vector compressed with \texttt{TernGrad} consists of 
the multipication of a scalar factor by a ternary vector with values in $\{-1, 0, 1\}$.  
In their secure aggregation, the authors propose to 
securely evaluate the sum of the ternary vectors and subsequently each 
client updates their local model from this aggregated ternary vector and according to 
their own local factor.
With this aggregation, contrary to traditional FL methods, each client obtains a 
different intermediate model at the beginning of each federated round and at the end of training.
It is not explicited how these different models should be used or aggregated in one model.
Contrary to this approach, we focus on the federated
training of a \emph{single} model across decentralized clients.
With respect to computation cost, the authors propose two different approaches 
to perform the secure aggregation, each of which requires its own flavor of 
costly cryptographic procedures. 
For the version based on TSS, the implementation requires the evaluation of some polynomial interpolations,
while their version based on Paillier HE requires 
some expensive homomorphic encryption and decryption operations.
For communication costs, the authors do not take into account the sparsity 
of the ternary vector and thus, their communication cost is not optimal with
respect to the significant information contained in the model update. 
More precisely, the total communication cost in bits per round for their secure 
aggregation is equal to
\begin{align}
    & 2 S C N \lceil \log_2 (2C + 1) \rceil \tag{for TSS-based Secure Agg.}\\
    & C N (prec + pad) \tag{for HE-based Secure Agg.}
\end{align}
where $prec = 24$ and $pad = 8$ are HE parameters used to avoid overflows.

In our proposed method, we adapt a compressed FL learning strategy to be more amenable to secure computation
in order to obtain a secure aggregation which has a lower communication and computation costs than reported in~\cite{PracticalSA_GIKM17,EaSTLy_DCSW20}.
Table~\ref{tab:comm-cost-estimations} summarizes the communication costs of the secure aggregations from
\cite{PracticalSA_GIKM17,EaSTLy_DCSW20}.

\section{Experiments}
\label{sec:experiments}

\begin{table*}[t!]
\small
\centering
\begin{tabular}{lcrrrrcr}
    \toprule
    {Strategy} & {Sec. Agg.} & \multicolumn{4}{c}{Parameters} & \multicolumn{2}{c}{\textbf{Metrics}} \\
    \cmidrule(l{2pt}r{2pt}){3-6} \cmidrule(l{2pt}r{2pt}){7-8}
     &  & {E} & {$R$} & {$\rho$} & {$\mathbb{E}~|V|$} & \textbf{Comp. cost} & \textbf{Comm.} \\
    \midrule
    Uncompressed \texttt{FedAvg} & {\xmark}  & $100$ &  $15$ &   {-} &     {-} & + & $35.31$MB\\ 
    \texttt{FedAvg}  w/ \texttt{TopBinary} \& \texttt{DirectAgg} & {\xmark}  & $100$ &  $17$ &   $0.10$ &   {-} & + & $7.23$MB \\ 
    \texttt{FedAvg}  w/ \texttt{TopBinary} \& \texttt{SepAgg} & {\xmark}  & $100$ &  $17$ &   $0.10$ &   $18\,253$ & + & $2.05$MB \\ 
    
    \midrule
    \cite{PracticalSA_GIKM17} & {\cmark} & $100$ &   $15$ & {-}  &  {-} & +++ & $35.37$MB \\
    
    \cite{EaSTLy_DCSW20} with TSS & {\cmark} & $100$ & $17$ &  {-} &  {-} & ++ & $10.00$MB \\
    
    \cite{EaSTLy_DCSW20} with HE & {\cmark} & $100$ &   $17$ &  {-} &  {-}  & +++ & $80.03$MB \\
    
    \midrule
    Our method w/ \texttt{NoUnion} & {\cmark} & $100$ &    $17$ & $0.10$ &  {-} & + & $10.01$MB \\
    
    \textbf{Our method w/ \texttt{SecUnion} (q=1)} & {\cmark} & $\mathbf{100}$ &    $\mathbf{22}$ & $\mathbf{0.10}$ & $\mathbf{14\,344}$  & \textbf{+} & $\mathbf{6.25}$\textbf{MB} \\
    Our method w/ \texttt{SecUnion} (q=5) & {\cmark} & $100$ &    $17$ & $0.10$ &  $18\,037$ & + & $15.43$MB \\
    
    Our method w/ \texttt{PartialSecUnion} & {\cmark} & $100$ & $17$ & $0.10$ & $18\,253$ & + & $10.46$MB \\ 
    
    Our method w/ \texttt{PlaintextUnion} & {\cmark} & $100$ &    $17$ & $0.10$ & $18\,253$ & + & $4.21$MB \\ 
    
   \bottomrule
\end{tabular}\\
\vspace{1ex}
(a) LeNet on MNIST (98\% Acc. Target) \\
\vspace{3ex}
\begin{tabular}{lcrrrrcr}
    \toprule
    {Strategy} & {Sec. Agg.} & \multicolumn{4}{c}{Parameters} & \multicolumn{2}{c}{\textbf{Metrics}} \\
    \cmidrule(l{2pt}r{2pt}){3-6} \cmidrule(l{2pt}r{2pt}){7-8}
     &  & {E} & {$R$} & {$\rho$} & {$\mathbb{E}~|V|$} & \textbf{Comp. cost} & \textbf{Comm.} \\
    \midrule
    Uncompressed \texttt{FedAvg} & {\xmark} & $100$ &  $101$ &   {-} &  {-} & + & $6.61$GB \\
    \texttt{FedAvg} w/ \texttt{TopBinary} \& \texttt{DirectAgg} & {\xmark} & $100$ &  $110$ &   $0.10$ &  {-} & + & $1.60$GB \\
    \texttt{FedAvg} w/ \texttt{TopBinary} \& \texttt{SepAgg} & {\xmark} & $100$ &  $110$ &   $0.10$ &  $663\,630$ & + & $0.41$GB \\
    
    \midrule
    \cite{PracticalSA_GIKM17} & {\cmark} & $100$ & $101$ &  {-} &  {-} & +++ & $6.61$GB \\
    
    \cite{EaSTLy_DCSW20} with TSS & {\cmark} & $100$ & $110$ &  {-} &  {-} & ++ & $1.80$GB \\
    
    \cite{EaSTLy_DCSW20} with HE & {\cmark} & $100$ &   $110$ &  {-} &  {-} & +++ & $14.40$GB \\
    
    \midrule
    Our method w/ \texttt{NoUnion} & {\cmark} & $100$ &  $110$ & $0.10$ & {-} & + & $1.80$GB \\
    
    \textbf{Our method w/ \texttt{SecUnion} (q=1)} & {\cmark} & $\mathbf{100}$ &  $\mathbf{114}$ & $\mathbf{0.10}$ & $\mathbf{574\,599}$ & \textbf{+} & $\mathbf{1.08}$\textbf{GB} \\
    Our method w/ \texttt{SecUnion} (q=5) & {\cmark} & $100$ &  $110$ & $0.10$ & $657\,164$ & + & $2.92$GB \\
    
    Our method w/ \texttt{PartialSecUnion} & {\cmark} & $100$ &  $110$ & $0.10$ & $663\,630$ & + & $2.03$GB \\
    
    Our method w/ \texttt{PlaintextUnion} & {\cmark} & $100$ &  $110$ & $0.10$ & $663\,630$ & + & $0.90$GB \\
    
    \bottomrule
\end{tabular}\\
\vspace{1ex}
(b) AlexNet on CIFAR-10 (80\% Acc. Target) \\
\vspace{2ex}
\caption{%
    Total communication cost 
    to reach the targeted accuracy with \texttt{FedAvg} strategy
    and our approach on five clients.\label{tab:all_xps}}
\end{table*}

In this section, we perform experiments on $C = 5$ clients and $S = 2$ servers,
leading to a moderately sized \emph{collaborative federated system},
representative of \textit{e.g.} healthcare FL applications.

\subsection{Datasets and Model Architectures}
To demonstrate the utility of our proposed approaches, we conduct a series
of image classification experiments on MNIST~\cite{MNISTdatabse} 
and CIFAR-10~\cite{CIFAR10database}. For client data, we use 
i.i.d. dataset partitions to mimic clients holding data samples from 
the same data distribution\footnote{%
    Such homogeneity cannot be expected in practice 
    but data heterogeneity is out of the scope of the current work.
}.
The MNIST dataset is composed of a 60k training image dataset and a 10k test image dataset.
For MNIST experiments, the training dataset is randomly shuffled
and then partitioned into a training dataset of 50k samples and a validation dataset of 10k samples.
Then, the training dataset is partitioned into five sets, and each is given to one
of the $C=5$ clients, thus we have 10k samples per client.
We similarly partition the 40k training dataset of CIFAR10.

For our MNIST experiments, we train a simple LeNet-5 network~\cite{LeNet_LBBH98}
which consists of $N = 61,706$ trainable parameters.
This architecture is detailed in Table~\ref{tab:LeNetArchitecture} in Appendix~\ref{appendix:network_architecture}.
For CIFAR-10, we train the same AlexNet~\cite{AlexNet_KSH17} architecture as demonstrated in the original
\texttt{FedAvg} work~\cite{CommEfficient_MMRH17},
which consists of $N = 1,756,426$ trainable parameters.
This architecture is detailed in Table~\ref{tab:AlexNetArchitecture} in Appendix~\ref{appendix:network_architecture}.
We also utilize the same image preprocessing
pipeline as was proposed in this work, e.g. random flipping, and color normalization.

The goal of our experiments is to demonstrate that with our proposed method, 
compared to \texttt{FedAvg}~\cite{FL_MMRA16} with no compression,
we obtain a computation and communication efficient secure aggregation
without compromising accuracy. We note that more modern neural architectures could be 
utilized, as well as larger datasets across more varied tasks. However, as our approach 
is agnostic to the underlying task, we utilize these simpler ML experiments for the sake of 
clarity of our demonstration.

\subsection{Hyperparameter Selection}
\label{subsection:hyperparameter_selection}
To select appropriate local model training hyperparameters,
we perform a grid search on the entire training dataset without FL (i.e. a single client),
evaluating performance on the held-out validation dataset.
For MNIST, we obtained an accuracy of $99.2\%$ with a learning rate of $0.01$,
a batch size of $64$ and a momentum of $0.9$.
For CIFAR-10, we obtained an accuracy of $80.6\%$ with a learning rate of $0.10$,
a batch size of $64$ and no momentum.
Although this approach cannot be utilized in a practical FL use-case, 
we use it for our FL experiments
and defer optimal hyperparameter tuning to future works.
Indeed, we stress that thanks to the communication efficiency of the proposed approach,
hyperparameter optimisation in FL could be eased by the proposed method.

\subsection{Targeted accuracy}
One of main constraints is to not compromise the prediction performance obtained using our
proposed techniques in comparison to non-secure non-compressed \texttt{FedAvg}.
The accuracy obtained with non-secure non-compressed \texttt{FedAvg}
training on five clients with the selected hyperparameters from Sec~\ref{subsection:hyperparameter_selection}
is equal to $99.0\%$ for MNIST (with $E=100$ and $50$ epochs)
and $81.3\%$ for CIFAR-10 (with $E=100$ and $150$ epochs).
From those performances, we selected a desired performance level to achieve:
$98\%$ accuracy for MNIST and $80\%$ accuracy for CIFAR-10.

\subsection{Results}

We conduct several ML training experiments to define the number of federated rounds $R$
required to obtain the targeted accuracy ($98\%$ for MNIST and $80\%$ for CIFAR-10)
for \texttt{FedAvg} and our proposed methods.
For experiments with \texttt{TopBinary} coding, we set $\rho = 0.1$ and we also report
the mean size of the aggregated union of indices over the course of federated training,
$\mathbb{E}~|V|$.
For the method presented in~\cite{PracticalSA_GIKM17}, no update compression is used
and thus, the number of federated round $R$ to reach the targeted accuracy is equal to that of uncompressed \texttt{FedAvg}.
The method presented in~\cite{EaSTLy_DCSW20} uses the \emph{TernGrad} model-update compression technique,
which is similar to our \texttt{TopBinary} coding.
Thus, we assume that this technique with \emph{TernGrad} compression requires a similar number of
federated rounds to obtain the same level of accuracy as our proposed techniques.
We recall that these approaches introduced in~\cite{EaSTLy_DCSW20} cannot be used in practice
because at the end of the FL training, each client obtains a different trained models
and we do not know how to aggregate these models into one model without compromising accuracy.

To compare the computation cost, we compare the cost of the most expensive operations.
For non secure aggregations and our proposed methods,
only additions (noted "+" in Table~\ref{tab:all_xps}) are performed.
\cite{EaSTLy_DCSW20} with TSS is based on polynomial interpolation (noted "++" in Table~\ref{tab:all_xps})
which is more expensive than additions.
\cite{EaSTLy_DCSW20} with HE and \cite{PracticalSA_GIKM17} are based on the most expensive operations
respectively Paillier HE and asymmetric encryption (noted "+++" in Table~\ref{tab:all_xps}).

Finally, we analytically estimate
the communication costs associated with these strategies 
(see Table~\ref{tab:comm-cost-estimations}),
the results of which are detailed in Table~\ref{tab:all_xps}.
These experimental results showcase the low total communication cost of our 
proposed compressed secure aggregation approach compared to other 
techniques with or without secure aggregation.
This communication cost can be further reduced with more fine tuning, as 
higher compression rates (i.e. $\rho < 0.02$) may be achievable for the 
same performance level \cite{SparseBinaryCompression_SWMS18}.

We emphasize that our \texttt{SecUnion} protocol is not correct, since some indices are missing in the union result.
In this protocol, the parameter $q$ is used to manage this amount of false negatives:
higher is $q$, smaller is the amount of false negatives.
With $q=1$, we observed that many selected gradients are not used during the secure aggregation due to the incorrectness of the \texttt{SecUnion} protocol.
Among all our experiments, our method with \texttt{SecUnion} ($q=1$) achieves
the best trade-off between privacy and communication cost.
In addition, this strategy has a lower communication cost than~\cite{PracticalSA_GIKM17,EaSTLy_DCSW20}.

\section{Conclusion}

In this article, we design a set of secure and efficient protocols for federated learning.
These protocols are based on quantization to highly reduce the communication cost of the aggregation
without compromising accuracy.
We adapt quantizated federated learning techniques to secure computation in order to be able
to efficiently and securely evaluate the aggregation steps during a federated training.
Thanks to these adaptations, our secure aggregation protocols solely rely on 
additions, without any expensive cryptographic operations, and thus have a 
very low computation cost.
Since our secure aggregations are only based on additions, it can 
be easily implemented by non-cryptographic experts.
We also prove that our protocols are private against malicious 
adversaries and correct against semi-honest adversaries.
Finally, our experiments show their efficiency in real use cases.
Compared to prior works, they have a lower computation cost,
as no expensive operations are used,
and a lower communication cost thanks to model update compression and lightweight SMC protocols.

Future works could investigate the interplay between the proposed protocols and heterogeneity,
which is another important challenge for FL.
Other compression approaches could also be used to further reduce the communication cost.
Finally, there still exist large
communication differences between the different security profiles of the secure
union techniques we propose. It would be of interest to find
better trade-offs in the security perimeter of such protocols against communication
requirements.

\section*{Acknowledgements}
We thank Jean du Terrail whose comments helped improve and clarify this manuscript.

\bibliography{references}
\bibliographystyle{mlsys2022}

\newpage
\appendix
\section{Practical Secure Aggregation \cite{PracticalSA_GIKM17} Communication Cost}
\label{appendix:PSA-CommCost}

The communication cost of the Practical Secure Aggregation protocol is presented in Section 7.1 of \cite{PracticalSA_GIKM17}.
For each federated round, each client transmits to the Server
$$2n \cdot a_K + (5n-4) \cdot a_S+m \lceil \log_2 R  \rceil \text{ bits},$$
where $n$ is the number of clients, $m$ is the number of trainable parameters, $a_K= 256$ 
is the number of bits in a public key, $a_S=256$ is the number of bits in an encrypted share,
and $\mathbb{Z}_R$ is the field in which all operations are evaluated.
And at the end of the federated round, the server transmits the aggregated model parameters to each client
at a cost of $m \lceil \log_2 R  \rceil \text{ bits}$.
Thus, the total communication cost for one federated round is equal to 
$$n \cdot (2n \cdot a_K + (5n-4) \cdot a_S+ 2m \lceil \log_2 R  \rceil) \text{ bits}.$$

Using our notation, the total communication cost for a single federated round is thus given as
$$C \cdot (2C \cdot a_K + (5C-4) \cdot a_S+ 2N \lceil \log_2 R  \rceil) \text{ bits},$$
where $C$ is the number of clients, $N$ is the number of trainable parameters and $\mathbb{Z}_R$ is the field in which all operations are evaluated.

\section{Further details on \texttt{TopBinary} Aggregation}
\label{appendix:separate-aggregation}
Let us recall that we want to operate
on the aggregation of some real-valued vectors $X_i \in \mathbb{R}^N$,
of which we have~$C$ different realizations.
From each of these vectors, we calculate one new vector
$D_i = {\rm sgn}\circ\text{Top-}k(X_i)$
and one scalar value
$\alpha_i = \frac{||X_i||_2}{||D_i||_2}$,
such that $||\alpha_i D_i||_2 = ||X_i||_2$.  
Because of the ${\rm sgn}\circ\text{Top-}k$ operation, the result \emph{sign} vector belongs
to the space $D_i \in \{-1, 0, 1\}^{N}$.
Now, since the value of $k$ is fixed,
we know, trivially, the number of non-zero entries: $k$.  And, since
the non-zero entries are $\pm 1$, we know 
$||D_i||_2 = \sqrt{k} \quad \forall i$,
thus $\alpha_i = \frac{1}{\sqrt{k}} ||X_i||_2$.

Let us now define the different variants of the aggregation again,
\begin{align}
    \bar{D} &= \frac{1}{C}\sum_{i=1}^C \alpha_i D_i
    = \frac{1}{C\sqrt{k}} \sum_{i=1}^C L_i D_i, \tag{\texttt{DirectAgg}}\\
    \tilde{D} &= \frac{1}{C^2}\sum_{i=1}^C \alpha_i \sum_{i=1}^C D_i 
    = \frac{1}{C \sqrt{k}}\cdot\frac{1}{C}\sum_{i=1}^C L_i \cdot \sum_{i=1}^C D_i, \tag{\texttt{SepAgg}}
\end{align}
where $L_i \triangleq ||X_i||_2$ is the Euclidean norm of the original
vector $X_i$. Our desire is to compute the value of $\bar{D}$,
however, for the efficiency of SMC, we may only make use of 
the estimate $\widetilde{D}$.
Given this, how far off will we be, 
and what are the effects of using such an estimate?

Let $\widetilde{L} \triangleq \frac{1}{C}\sum_{i=1}^C L_i$, i.e. the average norm
of all of the original vectors $X_i$, then
\begin{align}
    \widetilde{D} 
        &= \frac{\widetilde{L}}{\sqrt{k}} \cdot \frac{1}{C} \sum_{i=1}^C D_i 
        = \frac{1}{C\sqrt{k}} \sum_{i=1}^C \widetilde{L} D_i,
\end{align}
where we can see that the average norm $\widetilde{L}$ serves as
an approximation for the individual norms $L_i$.
We now evaluate the MSE difference between the \texttt{SepAgg} and \texttt{DirectAgg} as
\begin{align}
||\bar{D} - \widetilde{D}||_2^2 &=
    \frac{1}{C^2 k} \sum_{j=1}^N \left( \sum_{i=1}^C(L_i - \widetilde{L}) D_i[j]\right)^2,
\end{align}
which shows us that if the expectation of the difference~$L_i - \widetilde{L}$ is small,
we can expect that the \texttt{SepAgg} will be an adequate estimation of the
direct aggregation. 

\section{Further details on \texttt{SecUnion} protocol}
\label{appendix:secure_union}

For the analysis of the \texttt{SecUnion} protocol, let us assume that 
each client $i$ has selected independently and uniformly at random
their list of $k$ indices $V_i \subset [1,N]$,
then, for each client the probability of selecting the index $x \in [1, N]$ is 
equal to $k / N$.
Thus, the probability that exactly $t$ clients have selected the index $x$ is
\begin{equation}
    P(N, k, C, t) = \binom{C}{t} \left( \frac{k}{N} \right)^t \left( \frac{N-k}{N} \right)^{C-t}.
\end{equation}

\subsection{Expected number of false negatives}

When using the \texttt{SecUnion} protocol,
false negatives occur only when at least 2 clients have selected the same index $x$
and $\sum_{i=1}^C A_i[x] \mod 2^q$ is equal to zero. 
The probability that at least 2 clients have selected the same index $x$ is equal to
$\text{Pr}[t \geq 2 | N, k, C] = 1 - P(N, k, C, 0) - P(N, k, C, 1)$.
When one index is selected by at least 2 clients, the probability to obtain a false negative 
for this index is equal to~$1 / 2^q$.
Thus, the expected number of false negatives in $V$ is given by
$N \times P[t \geq 2 | N, k, C] \times \frac{1}{2^q}$.
The Table~\ref{tab:false_negatives} presents the average amount of false negatives in the \texttt{SecUnion} protocol for different parameters.

\begin{table}[!h]
    \small
    \centering
    \begin{tabular}{rrrrrr}
        \toprule
        {$N$} & {$\rho$} & {$k=\lfloor N \cdot \rho \rfloor$} & {$C$} & {$q$} & {$\mathbb{E}$}\\
        \midrule
        $61\,706$ & $0.1$ & $6\,170$ & $5$ & $1$ & $2\,513$ \\
        $61\,706$ & $0.1$ & $6\,170$ & $5$ & $5$ & $157$ \\
        $61\,706$ & $0.1$ & $6\,170$ & $5$ & $10$ & $5$ \\
        \midrule
        $1\,756\,426$ & $0.1$ & $175\,642$ & $5$ & $1$ & $71\,539$ \\
        $1\,756\,426$ & $0.1$ & $175\,642$ & $5$ & $5$ & $4\,471$ \\
        $1\,756\,426$ & $0.1$ & $175\,642$ & $5$ & $10$ & $140$ \\
        \bottomrule
    \end{tabular}
    \caption{Average amount of false negatives $\mathbb{E}$ in the \texttt{SecUnion} protocol.
    \label{tab:false_negatives}}
\end{table}

\subsection{One client learns that he alone has selected a given index}

Let us assume the client $i$ observes that $A[x] = R_{i,x}$ for one index~$x$.
The client $i$ is not alone to have selected this index $x$, when at least two other 
clients have selected this index $x$ and the sum of the random values 
$R_{i,x}$ selected by these other clients is equal to $0$ in~$\mathbb{Z}_{2^q}$.
The probability that at least two other clients have selected this index $x$ 
is equal to $\text{Pr}[t\geq 2 | N, k, C-1]$.
When several clients have selected this index $x$, 
the sum of their random values~$R_{i,x}$ will be equal to $0$ with probability~$1/2^q$.
Finally, if one client~$i$ observes that~$A[x] = R_{i,x}$, then the client can 
infer he alone has selected this index~$x$, with probability~$1 - \frac{1}{2^q}~\text{Pr}[t \geq 2 | N, k, C-1]$.
This probability is evaluated for some parameters in Table~\ref{tab:alone_select_index}.

\begin{table}[!h]
    \small
    \centering
    \begin{tabular}{rrrrrr}
        \toprule
        {$N$} & {$\rho$} & {$k=\lfloor N \cdot \rho \rfloor$} & {$C$} & {$q$} &  {Probability}\\
        \midrule
        $61\,706$ & $0.1$ & $6\,170$ & $5$ & $1$ & $97.39\%$ \\
        $61\,706$ & $0.1$ & $6\,170$ & $5$ & $5$ & $99.84\%$ \\
        $61\,706$ & $0.1$ & $6\,170$ & $5$ & $10$ & $99.99\%$ \\
        \midrule
        $1\,756\,426$ & $0.1$ & $175\,642$ & $5$ & $1$ & $97.39\%$ \\
        $1\,756\,426$ & $0.1$ & $175\,642$ & $5$ & $5$ & $99.84\%$ \\
        $1\,756\,426$ & $0.1$ & $175\,642$ & $5$ & $10$ & $99.99\%$ \\
        \bottomrule
    \end{tabular}
    \caption{If one client $i$ observes that $A[x] = R_{i,x}$, we evaluate the probability that this client alone has selected this index~$x$.
    \label{tab:alone_select_index}}
\end{table}

\section{Neural Networks Architectures}
\label{appendix:network_architecture}

In this appendix, we present the LeNet architecture used for experiments on MNIST (see Table~\ref{tab:LeNetArchitecture})
and the AlexNet architecture used for experiments on CIFAR-10
(see Table~\ref{tab:AlexNetArchitecture}).

\begin{table}[!h]
\small
\centering
\setlength\tabcolsep{2pt}
\begin{tabular}{lccccc}
\toprule
Layer Type & Filters & Kernel & Stride & Padding & Output Shape \\
\midrule
Conv2D+ReLU & 6 & (5,5) & (1,1) & (2,2) & $28 \times 28 \times 6$ \\
Max Pooling & & (2,2) & & & $14 \times 14 \times 6$ \\
Conv2D+ReLU & 16 & (5,5) & (1,1) &  & $10 \times 10 \times 16$ \\
Max Pooling & & (2,2) & & & $5 \times 5 \times 16$ \\
FC+ReLU & & & & & $120$ \\
FC+ReLU & & & & & $84$ \\
FC & & & & & $10$ \\
\bottomrule
\end{tabular}
\caption{Architecture of the LeNet used for experiments on MNIST.
\label{tab:LeNetArchitecture}}
\end{table}

\begin{table}[!h]
\small
\centering
\setlength\tabcolsep{2pt}
\begin{tabular}{lccccc}
\toprule
Layer Type & Filters & Kernel & Stride & Padding & Output Shape \\
\midrule
Conv2D+ReLU & 64 & (5,5) & (1,1) & (2,2) & $32 \times 32 \times 64$ \\
Max Pooling & & (3,3) & (2,2) & (1,1) & $16 \times 16 \times 64$ \\
\multicolumn{2}{l}{LocalResponseNorm} & & & & $16 \times 16 \times 64$ \\
Conv2D+ReLU & 64 & (5,5) & (1,1) & (2,2) & $16 \times 16 \times 64$ \\
\multicolumn{2}{l}{LocalResponseNorm} & & & & $16 \times 16 \times 64$ \\
Max Pooling & & (3,3) & (2,2) & (1,1) & $8 \times 8 \times 64$ \\
FC+ReLU & & & & & $384$ \\
FC+ReLU & & & & & $192$ \\
FC & & & & & $10$ \\
\bottomrule
\end{tabular}
\caption{Architecture of the AlexNet used for experiments on CIFAR-10. The LocalResponseNorm layers have the following parameters: $size=4$, $\alpha=0.001/9$, $\beta=0.75$, $k=1$.
\label{tab:AlexNetArchitecture}}
\end{table}

\section{Privacy Proof of SecureSum Algorithm}
\label{appendix:privacy-proof}

\begin{theorem}
    The SecureSum Protocol is private against a
    malicious adversary controlling at most $S-1$ servers and any number of clients.
    \label{th:privacy}
\end{theorem}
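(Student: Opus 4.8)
The plan is to use the standard real-world/ideal-world simulation paradigm described in Section~\ref{subsec:SMC}. Fix a static malicious adversary $A$ controlling a set of corrupted parties consisting of at most $S-1$ servers — say all servers except $Server_{j^*}$ for some honest index $j^*$ — together with an arbitrary subset of clients. I would build a simulator $S_A$ that is given only the inputs of the corrupted clients and the protocol output $X=\sum_{i=1}^C X_i \bmod m$, and that produces a view indistinguishable from the adversary's real view. The key observation is that the only messages an honest party sends are: (i) each honest client $i$ sends shares $R_{i,1},\dots,R_{i,S}$ with $R_{i,1},\dots,R_{i,S-1}$ uniform in $\mathbb{Z}_m^n$ and $R_{i,S}$ determined by the constraint $\sum_j R_{i,j}=X_i$; and (ii) the honest server $Server_{j^*}$ broadcasts $R_{j^*}=\sum_{i=1}^C R_{i,j^*}\bmod m$.

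First I would simulate the honest clients' messages to the corrupted servers: for each honest client $i$ and each corrupted server $j\neq j^*$, the simulator samples $\widehat R_{i,j}\in_R \mathbb{Z}_m^n$. Because $j^*$ is honest and not in the adversary's view, each honest client withholds at least one share ($R_{i,j^*}$), so the shares the adversary actually sees are a strict subset of a full additive sharing and are therefore jointly uniform and independent of $X_i$ — this is the crux of why a single honest server suffices. Second, I would simulate the broadcast of the honest server $Server_{j^*}$. Here the simulator must produce $\widehat R_{j^*}$ consistent with the value $X$ the clients will reconstruct: since the corrupted servers' broadcasts $\{R_j\}_{j\neq j^*}$ are fixed (the adversary, even if malicious, has produced them — possibly not honestly), the simulator sets $\widehat R_{j^*} \gets X - \sum_{j\neq j^*} R_j \bmod m$. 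One then checks this is distributed identically to the real $R_{j^*}$ conditioned on everything else in the view, which follows from the fact that in the real execution $R_{j^*}$ is a sum that includes the honest clients' shares $R_{i,j^*}$, each of which is (marginally) uniform and independent of the adversary's view.

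The main obstacle — and the point requiring the most care — is handling the \emph{malicious} behavior cleanly: the corrupted clients may send inconsistent shares to different servers, and the corrupted servers may compute their broadcast values arbitrarily rather than as honest sums. I would deal with this by defining the "effective input" of each corrupted client as whatever makes the reconstructed output match, i.e. extracting $X_i' \triangleq \sum_{j=1}^S R_{i,j} \bmod m$ from the shares the adversary sends (noting the honest server does receive a share from every client, so this extraction is well-defined), and arguing the output is a deterministic function of these extracted inputs plus the honest inputs. Since the adversary's entire view — the honest clients' shares to corrupted servers and the honest server's broadcast — is, as argued above, a collection of uniform values subject only to the single linear constraint that fixes the output, the simulator can reproduce it exactly; hence the real and ideal views are not merely indistinguishable but identically distributed, and privacy against the malicious adversary follows. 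I would close by remarking that correctness is \emph{not} claimed in this malicious setting, consistent with Theorem~\ref{th:correctness} and the discussion preceding the statement.
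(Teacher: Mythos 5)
Your overall route is the same as the paper's: a real-world/ideal-world simulation whose crux is that each honest client withholds the one share destined for an honest server, so the at most $S-1$ shares per honest client that the adversary sees are jointly uniform and independent of that client's input (the paper packages exactly this as Lemma~\ref{lemma:uniformly-random} plus a case split on the number of corrupted clients). Where you diverge is in simulating the honest server's broadcast: you force it to be consistent with the output $X$, whereas the paper's simulator (its middle case) draws the honest servers' broadcasts as fresh uniform vectors and argues only their marginal uniformity. Your consistency requirement is the right instinct: in a real execution the adversary's view pins down $\sum_{i \notin I} X_i$ (writing $I$ for the corrupted clients), as it must, since the adversary legitimately learns this from its own inputs and the output; so a simulation in which the honest broadcast is independent of everything else would be distinguishable by a distinguisher who knows the inputs. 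On this point your plan is tighter than the paper's writeup.

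However, the concrete construction has genuine gaps. First, you treat only the extreme case of exactly $S-1$ corrupted servers, while the theorem allows \emph{at most} $S-1$: with several honest servers their broadcasts are uniform \emph{subject to a single sum constraint}, which a lone-$j^*$ simulator does not produce. Second, the rule $\widehat R_{j^*} \gets X - \sum_{j \neq j^*} R_j$ is ill-founded against malicious servers: the honest server broadcasts $\sum_i R_{i,j^*}$ without waiting for the other servers, so a rushing adversary can withhold its broadcasts until after seeing $\widehat R_{j^*}$ (the simulator cannot evaluate your expression when it must speak), and even ex post the corrupted servers need not broadcast honest aggregates, in which case your value differs from the real one. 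The consistent value must be built from what the simulator actually sees, e.g.\ $\widehat R_{j^*} = \sum_{i \in I} R_{i,j^*} + \bigl(X - \sum_{i \in I} X_i\bigr) - \sum_{i \notin I}\sum_{j \neq j^*} \widehat R_{i,j} \bmod m$; relatedly, your justification that $R_{i,j^*}$ is ``independent of the adversary's view'' is off --- given $X_i$ and the $S-1$ seen shares, the withheld share is determined, which is precisely why the broadcast must be set deterministically rather than argued uniform. Third, your effective-input extraction $X_i' = \sum_{j} R_{i,j}$ is not computable by the simulator, which never observes messages between two corrupted parties; under the privacy-only definition the paper uses (simulator given the corrupted parties' inputs and outputs) it is also unnecessary. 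Finally, note that when no client is corrupted the broadcasts are not in the adversary's view at all, which is the paper's (easy) first case and is handled there without any output-consistency step.
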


The privacy proof of this theorem is mainly based on the following lemma.

\begin{lemma}
Let $b \in \mathbb{Z}_m$. If $r$ is a uniform random element in $\mathbb{Z}_m$,
then $b+r$ is also a uniform random element in $\mathbb{Z}_m$
(even if $b$ is not a uniform random element in $\mathbb{Z}_m$).
\label{lemma:uniformly-random}
\end{lemma}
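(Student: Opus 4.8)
The plan is to show that adding a fixed element $b$ acts as a bijection on $\mathbb{Z}_m$ and that bijections preserve the uniform distribution. First I would fix an arbitrary target value $c \in \mathbb{Z}_m$ and compute $\Pr[b+r = c]$. Since addition and subtraction modulo $m$ are well-defined operations on $\mathbb{Z}_m$, the event $\{b+r = c\}$ coincides exactly with the event $\{r = c - b \bmod m\}$, where $c - b \bmod m$ is one specific element of $\mathbb{Z}_m$. By the hypothesis that $r$ is uniform on $\mathbb{Z}_m$, this probability equals $1/m$, with no dependence on $c$. As this holds for every $c \in \mathbb{Z}_m$ and the $m$ probabilities sum to $1$, the law of $b+r$ is precisely the uniform law on $\mathbb{Z}_m$.

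Equivalently — and this is the formulation I would emphasize, since it is the one reused in the privacy proof of Algorithm~\ref{algo:secure-sum} — the translation map $\tau_b\colon \mathbb{Z}_m \to \mathbb{Z}_m$, $x \mapsto x + b \bmod m$, is a bijection with inverse $\tau_{-b}$, hence a permutation of the $m$ elements of $\mathbb{Z}_m$; pushing the uniform distribution forward along a permutation returns the uniform distribution. The parenthetical ``even if $b$ is not a uniform random element'' is then automatic, because $b$ is treated throughout as a fixed (but arbitrary) constant and is never assigned any distribution; if one instead wishes to let $b$ be random, it suffices to additionally assume $b$ independent of $r$ and condition on the value of $b$, applying the fixed-$b$ statement inside the conditional expectation.

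There is essentially no hard step here. The only points requiring care are (i) making explicit that all arithmetic is modulo $m$, so that $c - b$ genuinely denotes an element of $\mathbb{Z}_m$ and the rewriting of the event is an equality of events rather than an approximation, and (ii) keeping the fixed-$b$ statement distinct from any random-$b$ variant — the later use of the lemma only invokes the fixed-$b$ version, applied coordinate-wise to the vectors and independently to each server's share $R_{i,j}$ in Algorithm~\ref{algo:secure-sum}.
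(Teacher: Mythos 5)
Your proof is correct and is exactly the standard translation-bijection argument that the paper implicitly relies on (the paper states Lemma~\ref{lemma:uniformly-random} without giving an explicit proof, treating it as folklore). Your care in fixing $b$ as a constant, rewriting the event $\{b+r=c\}$ as $\{r=c-b \bmod m\}$, and noting the coordinate-wise, fixed-$b$ use inside the proof of Theorem~\ref{th:privacy} matches how the lemma is actually invoked.
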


We will now prove Theorem~\ref{th:privacy}.

\begin{proof}
Let $A$ be an adversary controlling $p<S$ servers,
$Server_{j_1}$, \dots, $Server_{j_p}$, and
$q \leq C$ clients, $Client_{i_1}$, \dots, $Client_{i_q}$.
In this proof, we will use the following notations for
different sets of servers and clients,
$\begin{cases}
    J &= \{j_1, ..., j_p\} \\
    I  &= \{i_1, ..., i_q\}
\end{cases} \text{ and }
\begin{cases}
    \overline{J} &= \{j~|~ j \in [1, S] \text{ and } j \notin J\} \\
    \overline{I} &= \{i~|~i \in [1,C] \text{ and } i \notin I\}
\end{cases}$.
We will prove the privacy of our protocol in the ideal-real paradigm~\cite{SimulationGame_L17}.
We will first describe the simulator $S_A$, who simulates the 
view of the adversary $A$ in the ideal world.
Then, we will prove that the views of the adversary $A$
in the ideal world and in the real world are indistinguishable.
We will split the proof in three cases, depending on the number of 
clients the adversary controls: 
I. no clients ($q = 0$),
II. at least one and up to $C-2$ clients ($1 \leq q \leq C-2$), and
III. all clients, or all clients except one ($C-1 \leq q \leq C$).

\vspace{1em}

\centerline{\textsc{Case I: $q = 0$}}

\noindent \textbf{Simulation.}
$S_A$ picks $p \times C$ uniformly random vectors $\tilde{R}_{i,j}, \forall  (i,j) \in [1,C] \times J$ into  $\mathbb{Z}_m^n$ and sends them to the adversary $A$.

\noindent \textbf{Indistinguishability of Views.}
We will now prove that the views of $A$ in the real world and in the ideal world are indistinguishable.
In the real world, the adversary $A$ receives from honest parties the following messages
\begin{equation}
    \forall (i,j) \in [1,C] \times J, R_{i,j}.
\end{equation}
If $S \notin J$, then by construction,
$\forall (i,j) \in [1, C] \times J, R_{i,j}$ 
are independently and uniformly random in $\mathbb{Z}_m^n$ 
(see L.2 in Alg.~\ref{algo:secure-sum}).
Thus, the views of the adversary~$A$ in the real world and in the ideal world are indistinguishable.

If $S \in J$, since $J \subsetneq [1, S]$, then $[1, S-1] \bigcap \overline{J}$ is not empty.
Let $j^*$ be one element of $[1, S-1] \bigcap \overline{J}$, and 
with it rewrite~$R_{i, S}$ as
\begin{equation}
    R_{i,S} = \left( X_i - \sum_{j \in [1, S-1] \backslash j^*} R_{i,j} \right)  - R_{i,j^*} \mod m.
\end{equation}
By construction, $ \forall i \in [1,C]$, the values $R_{i, j^*}$ 
are uniformly random in $\mathbb{Z}_m^n$
(see L.2 in Alg.~\ref{algo:secure-sum}).
By applying Lemma~\ref{lemma:uniformly-random},~$\forall i \in [1,C]$,
the values $R_{i, S}$ are uniformly random in $\mathbb{Z}_m^n$.

This proves that all messages received by $A$ are 
independently and uniformly random in $\mathbb{Z}_m^n$, and 
thus, the views of the adversary~$A$ in the real world and 
in the ideal world are indistinguishable.

\vspace{1em}

\centerline{ \textsc{Case II: $1 \leq q \leq C-2$} }

\noindent \textbf{Simulation.}
$S_A$ picks $(C-q) \times p + (S-p)$ uniformly random vectors 
into  $\mathbb{Z}_m^n$ noted 
$\tilde{R}_{i,j}, \forall (i,j) \in \overline{I} \times J$  
and $\tilde{R}_j, \forall j \in \overline{J}$
and sends them to the adversary $A$.

\noindent \textbf{Indistinguishability of Views.}
In the real world, the adversary $A$ receives from honest parties the following messages:
\begin{equation}
    \begin{cases}
    R_{i,j}, \quad \forall (i,j) \in \overline{I} \times J \\
    R_j, \quad \forall j \in \overline{J}
    \end{cases}
\end{equation}
With a similar proof to \textsc{Case I}, we can demonstrate that
$\forall (i,j) \in \overline{I} \times J$, the values $R_{i,j}$ 
are independently and uniformly random in $\mathbb{Z}_m^n$.
Now, let $i^*$ be an element in $\overline{I}$, where $\overline{I}$ 
is not empty because the adversary controlled at most $(C-2)$ clients.
Then, $\forall j \in \overline{J}$, we can rewrite $R_j$ as
$R_j = \left(\sum_{i \in [1,C] \backslash i^*} R_{i,j}\right) + R_{i^*,j} \mod m$.

By construction, for all $j \in \overline{J}$, $R_{i^*, j}$ are independently and uniformly random in $\mathbb{Z}_m^n$
(see L.2 in Alg.~\ref{algo:secure-sum}).
By applying Lemma~\ref{lemma:uniformly-random},
$\forall j \in \overline{J}$, the values 
$R_j$ are uniformly random in~$\mathbb{Z}_m^n$.

This proves that all messages received by $A$ are independently and 
uniformly random in~$\mathbb{Z}_m^n$, and
thus, the views of the adversary $A$ in the real world and in 
the ideal world are indistinguishable.

\vspace{1em}

\centerline{\textsc{Case III: $C-1 \leq q \leq C$}}

Assume that the adversary controls all the clients ($q=C$).
Then, the adversary knows all inputs and outputs.
Thus, our protocol is private according to the privacy definition 
outlined in Sec.~\ref{subsec:SMC}.

Now assume that the adversary controls all clients \emph{except} 
the client $i^*$.
Then, the inputs of the parties controlled by the adversary are 
$\{X_i\}_{i \in [1,C] \backslash i^*}$ and the outputs of the parties
controlled by the adversary are $X = \sum_{i \in [1, C]} X_i$.
The adversary can easily infer the input of the client $i^*$ as
$X_{i^*} = X - \sum_{i \in [1,C] \backslash i^*} X_i \mod m$.
Thus, the adversary cannot learn anything more, because they already know
everything. 
Thus, our protocol is private according to the privacy definition in 
Sec.~\ref{subsec:SMC}.

\end{proof}

We just proved that the SecureSum protocol (Alg.~\ref{algo:secure-sum}) is private
against a malicious adversary controlling at most~$(S-1)$ servers and any number of clients.
That means that a malicious adversary cannot learn more than
what he can infer from the inputs and outputs of the parties they control.
In practice, if the adversary controls \emph{all} clients, the adversary already knows everything.
If the adversary controls all clients \emph{except one}, he can deduce the inputs of this honest client
from the inputs and outputs of the clients that are under his control.
If the adversary controls at most~$(C-2)$ clients, the adversary can infer the sum of the inputs
of the honest clients from the inputs and outputs of the clients that he controls,
but not the individual inputs of the honest parties. In this manner, the security of the
individual clients is preserved, however the sum of vectors of honest parties is not.

\end{document}